\documentclass[letterpaper, 10pt, conference]{ieeeconf}
\IEEEoverridecommandlockouts
\usepackage{cite}
\usepackage{amsmath,amssymb,amsfonts}
\usepackage{graphicx}
\usepackage{booktabs}
\usepackage{multirow}
\usepackage{textcomp}
\usepackage{xcolor}

\newcommand{\method}{SCVC}
\newcommand{\YI}{Y_{\mathrm{I}}}
\newcommand{\YC}{Y_{\mathrm{C}}}

\newtheorem{lemma}{Lemma}
\newtheorem{proposition}{Proposition}
\newtheorem{assumption}{Assumption}

\begin{document}

\title{\LARGE \bf Selective Cross-View Consistency for World Action Models:\\ Held-Out Viewpoint Robustness Without Test-Time Camera Information}

\author{\authorblockN{Bingqi Huang, Bingchuan Wei, Yingkai Cai, Zhaokui Wang}
\authorblockA{Tsinghua University\\
\texttt{hbq21@mails.tsinghua.edu.cn}}}

\maketitle

\begin{abstract}
World action models (WAMs) jointly denoise future video frames and robot actions, and the video prior is expected to make their control more general. Camera viewpoint change remains one of their hardest perturbation axes. We study a question that is specific to this model class: when training with same-state cross-view image pairs, on which output coordinates should a consistency loss be imposed? The WAM denoising target mixes view-covariant coordinates, namely the predicted future scene, with view-invariant coordinates, namely the action chunk, future proprioception, and value. We show that consistency applied to the covariant block is provably harmful, shrinking legitimate view-specific content to a fraction $1/(1+4\lambda)$ of its true value, and we verify this shrinkage law in controlled experiments. Selective cross-view consistency (\method{}) therefore constrains only the invariant block, requires no camera labels, extrinsics, depth, or view synthesis at training or test time, and leaves the deployment interface unchanged. We introduce a carve-and-hold-out evaluation protocol on the LIBERO-Plus camera track that separates a distribution-matched ceiling from genuine interpolation and extrapolation to held-out viewpoints, with a matched pair-trained control isolating the effect of the consistency term from pair exposure. On held-out orbital viewpoints beyond the training envelope, \method{} improves closed-loop success over the matched control by $12.2$ points ($95\%$ CI $[7.4, 17.0]$; $+15.5$, CI $[11.7, 19.4]$, under an independent second training seed)---an effect two further camera axes replicate---while interpolation within the envelope shows no gain in either seed ($-1.2$ and $-4.3$ points) and in-distribution competence is preserved ($-0.6$, $-0.2$). We additionally report a cross-backbone audit showing that published camera-robustness numbers are strongly confounded by wrist-camera pose stability.
\end{abstract}

\section{Introduction}
\label{sec:intro}

World action models post-train large video generation backbones into robot policies that jointly predict future observations and actions~\cite{kim2026cosmos,ye2026dreamzero,wang2026wamsurvey}. The premise of the paradigm is that pretrained visual world knowledge transfers to control. On several perturbation axes this promise holds up well, yet benchmark studies consistently identify camera viewpoint as one of the hardest axes for WAMs and VLAs alike: on LIBERO-Plus, camera perturbations produce some of the largest drops across model families~\cite{fei2025liberoplus}, and a recent robustness comparison concludes that video priors offer world action models little benefit precisely when the geometric configuration of the scene is altered~\cite{zhang2026generalize}.

\begin{figure*}[t]
\centering
\includegraphics[width=0.97\textwidth]{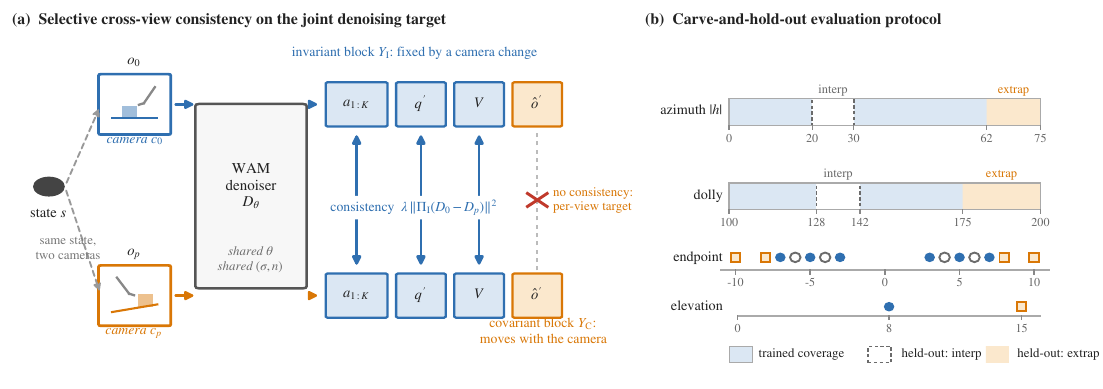}
\caption{\method{} and the carve-and-hold-out protocol. \textbf{(a)} A world action model denoises a joint target that mixes coordinates fixed by a camera change---the action chunk $a_{1:K}$, future proprioception $q'$, and value $V$ (invariant block $\YI$, blue)---with the predicted future scene $\hat o'$, which must track the camera (covariant block $\YC$, orange). \method{} renders the same state under two cameras, passes both through the shared denoiser under a shared noise draw $(\sigma, n)$, and penalizes cross-view disagreement only on $\YI$; the future scene keeps its per-view target, since forcing agreement there provably ghosts the prediction (Proposition~\ref{prop:shrink}). \textbf{(b)} Training pairs cover each benchmark camera axis except held-out regions. Dense axes are carved: an interpolation hole inside the trained range, extrapolation beyond it. Sparse axes are flipped: training uses only off-benchmark values (endpoint $\{\pm 3, \pm 5, \pm 7\}$; elevation $8$), so every official value is held out (endpoint interp $\{\pm 4, \pm 6\}$, extrap $\{\pm 8, \pm 10\}$; elevation extrap $15$). Official tasks are bucketed by their camera parameters into in-distribution, interpolation, and extrapolation buckets.}
\label{fig:concept}
\end{figure*}

Published camera-track numbers also obscure what is being measured. Most evaluated policies consume a wrist camera in addition to the third-person scene camera, and the wrist view is pose-stable by construction: it moves with the arm, so a scene-camera perturbation barely changes it. In a masking audit (Section~\ref{sec:audit}) we find that a wrist-equipped Cosmos policy still solves two-thirds of camera-perturbed tasks with its scene camera blacked out, yet collapses to near zero with its wrist blacked out; on a public $\pi_{0.5}$ policy~\cite{intelligence2025pi05} the contrast is $100\%$ against $0\%$. On the same benchmark, a wrist-equipped checkpoint loses only $19$ points under camera perturbation while its scene-only counterpart loses $67$. Published wrist-enabled camera robustness therefore conflates scene-view invariance with wrist stability. Our study consequently trains and evaluates scene-only policies, a controlled lesion that makes scene-view invariance directly measurable.

With the measurement question settled, the method question becomes visible, and it is unique to WAMs. A natural route to viewpoint robustness is consistency training on same-state image pairs rendered from different cameras: the same physical state must map to the same action regardless of the viewpoint. For a VLA, whose entire output is an action, applying such a loss is conceptually straightforward, and prior work has shown it beats training on the same pairs without the constraint~\cite{anon2026crossview}. A WAM breaks this simplicity. Its denoising target contains coordinates that must \emph{change} with the camera (the predicted future scene frame) next to coordinates that must \emph{not} (the action chunk, future proprioception, and value). Enforcing agreement across views on the wrong coordinates actively degrades the model: we show it drives the predictions toward a ghosted average of the two views' futures, shrinking the view-specific content to exactly $1/(1+4\lambda)$ of its true value at the pointwise optimum (Proposition~\ref{prop:shrink}). Selectivity is thus a structural requirement. Our method, selective cross-view consistency (\method{}), imposes the consistency penalty only on the invariant block (Fig.~\ref{fig:concept}a), keeps per-view supervision on the covariant block, and shares the diffusion noise draw across the two branches of each pair, a pairing choice we also justify analytically (Proposition~\ref{prop:noise}).

Evaluating such a method raises a second problem that deserves as much care as the method itself. Any consistency or augmentation method must train on perturbed viewpoints, so it can never claim the zero-shot setting of a nominally trained baseline. If the training camera distribution matches the benchmark distribution, high scores certify view-invariant control under a disclosed contract but say nothing about unseen viewpoints; if training viewpoints are sampled disjoint from the benchmark but concentrated near nominal, the method fails on severe views simply for lack of coverage. We encountered both failure modes in early versions of this work. Our carve-and-hold-out protocol (Section~\ref{sec:protocol}; Fig.~\ref{fig:concept}b) is the disciplined middle ground: training covers the full severity range of each camera axis except deliberately held-out bands, and the official benchmark tasks are re-partitioned by their camera parameters into in-distribution, interpolation, and extrapolation buckets. One benchmark then yields both a distribution-matched ceiling and a genuine out-of-distribution test, and a matched control trained on identical carved pairs with the consistency weight set to zero isolates what the objective adds beyond pair exposure.

This paper makes four contributions. (i) We formulate the selective consistency principle for structured denoising targets and instantiate it as \method{} on an open 2B-parameter WAM, with an analysis showing that the supervised anchor constrains only the cross-view mean while the consistency term reallocates optimization pressure onto the view-sensitive subspace (Lemma~\ref{lem:identity}), and that wrong-coordinate consistency incurs a quantified shrinkage bias that we confirm at machine precision in simulation and within $10\%$ on a controlled denoiser trained to convergence (Section~\ref{sec:verification}). (ii) We introduce the carve-and-hold-out evaluation protocol together with an explicit disclosure framework separating distribution matching from data leakage. (iii) We report controlled results on the LIBERO-Plus camera track: constraining only the invariant block yields a $+12.2$-point gain on held-out extrapolation-regime orbital viewpoints ($+15.5$ under a second training seed), replicated on two further camera axes, with no gain under within-envelope interpolation and preserved in-distribution success. (iv) We provide the cross-backbone wrist-dominance audit as a correction to current evaluation practice. Code and protocol tooling will be released.

\section{Related Work}
\label{sec:related}

\textbf{World action models and their robustness.}
Cosmos Policy~\cite{kim2026cosmos} post-trains a video diffusion backbone with actions, proprioception, and value injected as dedicated latent frames; DreamZero~\cite{ye2026dreamzero} trains a 14B autoregressive video-action model that transfers zero-shot; surveys map the growing family~\cite{wang2026wamsurvey}. Robustness for these models has so far been pursued through architecture and representation, for example object-centric decompositions~\cite{liu2026oawam}, through diagnosis of the foresight-action gap~\cite{qiu2026foresight}, and through benchmark construction~\cite{zhang2026generalize}. Closest in spirit, ReViWo~\cite{pang2025reviwo} learns a view-invariant \emph{encoder} for a latent world model contrastively while its decoder stays deliberately view-dependent; invariance there is a property of the representation, whereas we constrain the model's \emph{outputs} selectively by their transformation law. AGRA~\cite{qiu2026foresight} is the closest diagnostic neighbor: it shows that plausible predicted futures do not guarantee accurate actions and repairs the gap with representation alignment. We differ in conditioning on a controlled nuisance, separating output blocks by their transformation laws, and repairing the gap with a training objective in place of an alignment module. To our knowledge no prior work constrains how the joint video-action output transforms under camera change at the objective level.

\textbf{Viewpoint-robust visuomotor policies.}
Existing routes fall into families that are orthogonal to ours and combinable with it. Test-time view synthesis re-renders the observed view toward the training camera~\cite{heo2026anycamvla,gu2026vistabot}, changing the deployment pipeline; camera conditioning feeds explicit extrinsics to the policy~\cite{jiang2025camera}, changing the input contract. Augmentation methods synthesize novel training views~\cite{tian2024vista,chen2024roviaug,cai2026beyond}, and the finding of~\cite{cai2026beyond} that multi-view supervision helps even at a fixed test view underlines that marginal coverage and per-state constraint are different mechanisms; none of these place synthesized views inside a consistency loss. Reinforcement post-training with smoothness regularization targets noise, a different nuisance from viewpoint equivalence~\cite{zhang2025robustvla}. Equivariant policies~\cite{yang2024equibot,wang2024equivariant} hard-code analytic symmetry groups, whereas the camera nuisance here has no analytic group action on pixels and is realized only by data; moreover the selectivity problem does not arise for a single-output policy. Finally, the field disagrees about where actions should live: some systems predict actions in camera coordinates while others learn view-invariant latent actions~\cite{jeong2026vila}; the latter also evaluates on held-out interpolated and extrapolated viewpoints, on custom tasks with a representation-level objective, which corroborates the regime split we instantiate on a public benchmark's official perturbations. This disagreement is exactly why we state the transformation law of every output block explicitly and verify the base-frame action convention in code (Assumption~\ref{ass:frame}).

\textbf{Consistency regularization.}
Our objective descends from semi-supervised consistency training~\cite{laine2017temporal,tarvainen2017mean,sohn2020fixmatch} and from the theory of augmentation as orbit averaging~\cite{chen2020group}. Most relevant is the analysis of data augmentation consistency by Yang et al.~\cite{yang2023dac}, which proves that enforcing consistency on augmented pairs yields lower estimation error than empirical risk minimization on the same augmented data. Our matched control (Section~\ref{sec:protocol}) is the empirical counterpart of that comparison. The theory, however, assumes the entire label is invariant under the augmentation. The WAM target violates this assumption on the future-scene block, and Proposition~\ref{prop:shrink} quantifies the resulting bias; the noise-conditional structure of denoising adds a second matching requirement on the noise draw (Proposition~\ref{prop:noise}) that has no analogue for deterministic predictors. Cross-view action consistency for flow-matching VLAs~\cite{anon2026crossview} validated the mechanism on a fully invariant output; the present work is the extension to structured targets where selectivity becomes necessary.

\section{Problem Setup}
\label{sec:setup}

Let $s$ denote the physical state of the scene and robot, $c$ the scene-camera parameters, and $o = h(s, c)$ the rendered observation. A policy receives $(o, q, \ell)$, where $q$ is proprioception and $\ell$ a language instruction. An \emph{action-equivalent pair} is a pair of observations $(o_0, o_p) = \bigl(h(s, c_0),\, h(s, c_p)\bigr)$ rendered from the same state under a nominal and a perturbed camera; both share the demonstrated action chunk $a \in \mathbb{R}^{K \times A}$ (here $K{=}16$), the future proprioception $q'$, and the value label $V$.

The WAM denoises a composite target $Y = (\YI, \YC)$. The invariant block $\YI = (a, q', V)$ consists of coordinates fixed by a camera change; the covariant block $\YC = o' = h(s', c)$ is the future scene frame, which transforms with the camera. In the Cosmos Policy instantiation used here, the scene-only latent sequence has seven frames: a blank frame, current proprioception, the current scene image (clean conditioning), the action chunk, future proprioception, the future scene image, and value. The two branches of a pair differ only in the conditioning frame and in the content of the future-scene target; the frames of $\YI$ carry bitwise-identical targets across the branches, a property we assert at training time.

\begin{assumption}[Action coordinate convention]
\label{ass:frame}
Action chunks are delta end-effector commands expressed in the robot base frame. Under this convention $\YI$ is invariant to camera change provided the pair shares the physical state, both views are task-informative, and the perturbation does not alter task semantics. We verify the convention directly against the dataset and controller: a regression of stored actions onto base-frame end-effector displacements recovers a scaled identity map. If actions were expressed in camera coordinates, as some recent systems deliberately choose, the invariance premise would fail by construction.
\end{assumption}

View robustness requires the induced action distribution $\pi_\theta(a \mid h(s,c), q, \ell)$ to be constant in $c$ over the deployment camera range. Training on camera-diverse data enlarges the marginal support of $c$ but imposes no per-state constraint tying $\pi_\theta(a \mid h(s,c_0))$ to $\pi_\theta(a \mid h(s,c_p))$; the estimation-error separation between augmented ERM and augmentation consistency~\cite{yang2023dac} formalizes why the constraint should add value on identical data, and held-out viewpoints are where the difference should be largest, since a per-state equivalence has more reason to generalize beyond the sampled cameras than marginal coverage does.

\section{Selective Cross-View Consistency}
\label{sec:method}

\subsection{Objective}
\label{sec:objective}

The backbone is an EDM-style denoiser~\cite{karras2022edm} $D_\theta(x_\sigma; \sigma, \mathrm{cond}(o_v), \ell)$ producing $x_0$ predictions over the latent frame sequence, trained with per-frame weight $w(\sigma)$. For every pair we draw one noise level and one noise realization $(\sigma, n)$ and apply them to both branches, which then differ only in the scene conditioning frame and the covariant target content. Writing $D_v$ for the prediction of branch $v \in \{0, p\}$, $x^v$ for its target, and $\Pi_{\mathrm{I}}$ for the selector of invariant frames, the loss is
\begin{equation}
\label{eq:scvc}
\mathcal{L} = \tfrac{1}{2}\!\!\sum_{v \in \{0,p\}}\!\! w(\sigma)\,\bigl\lVert D_v - x^v \bigr\rVert^2
\;+\; \lambda(t)\, w(\sigma)\,\bigl\lVert \Pi_{\mathrm{I}}\,(D_0 - D_p) \bigr\rVert^2,
\end{equation}
with $\lambda(t)$ ramped linearly to $\lambda_{\mathrm{CV}} = 2.0$ over the first $2\%$ of training. The covariant frame receives only the per-view supervised term, so each branch remains free to predict its own camera's future. Both branches contribute the supervised term regardless of $\lambda$, so the $\lambda{=}0$ configuration is a camera-augmented control; the evaluation design is built on this comparison. The consistency term applies to demonstration samples, for which same-state pairs can be rendered from recorded simulator states; rollout-mode samples pass through single-view. No camera parameters, extrinsics, depth, or synthesized views enter the model at training or test time; the pair structure exists only in the loss.

\subsection{Why selectivity is necessary}
\label{sec:theory}

Three short results justify the design; each proof is elementary, so we give it inline.

\begin{lemma}[Mean-residual decomposition]
\label{lem:identity}
For predictions $D_0, D_p$ with a common target $u$, let $\bar{D} = \tfrac{1}{2}(D_0 + D_p)$ and $\delta = \tfrac{1}{2}(D_0 - D_p)$. Then
\begin{align*}
\tfrac{1}{2}\bigl(\lVert D_0 - u\rVert^2 + \lVert D_p - u\rVert^2\bigr) &+ \lambda \lVert D_0 - D_p\rVert^2 \\
&= \lVert \bar{D} - u\rVert^2 + (1 + 4\lambda)\lVert \delta \rVert^2 .
\end{align*}
\end{lemma}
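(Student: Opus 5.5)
The plan is to change variables so that both branch predictions are written in terms of the mean and the half-difference: $D_0 = \bar D + \delta$ and $D_p = \bar D - \delta$. These follow directly from the definitions of $\bar D$ and $\delta$. The claim then becomes a polynomial identity in the two vectors $\bar D - u$ and $\delta$, and I would verify it by expanding both terms on the left-hand side separately.

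For the supervised part, set $e = \bar D - u$, so that $D_0 - u = e + \delta$ and $D_p - u = e - \delta$. Apply the parallelogram law,
\begin{equation*}
\lVert e+\delta\rVert^2 + \lVert e-\delta\rVert^2 = 2\lVert e\rVert^2 + 2\lVert\delta\rVert^2 .
\end{equation*}
The cross terms $\pm 2\langle e,\delta\rangle$ cancel. After the factor $\tfrac12$, the supervised part equals $\lVert \bar D - u\rVert^2 + \lVert\delta\rVert^2$.

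For the consistency part, note that $D_0 - D_p = 2\delta$. Hence $\lambda\lVert D_0-D_p\rVert^2 = 4\lambda\lVert\delta\rVert^2$. Adding the two parts gives $\lVert\bar D-u\rVert^2 + (1+4\lambda)\lVert\delta\rVert^2$, which is the claimed identity.

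There is no real obstacle here. The only step needing care is the cancellation of the cross terms, and it depends on both branches sharing the same target $u$. In the paper's setting this holds because the $\YI$ frames carry bitwise-identical targets across branches. I would close by stating the consequence used later: the supervised anchor involves only the mean $\bar D$, while the consistency term only increases the weight on the view-sensitive residual $\delta$.
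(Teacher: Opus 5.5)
Your proof is correct and follows the paper's argument: substitute $D_0 = \bar{D} + \delta$ and $D_p = \bar{D} - \delta$, let the cross terms $\pm 2\langle \bar{D} - u, \delta\rangle$ cancel in the supervised part, and use $D_0 - D_p = 2\delta$ to get $4\lambda\lVert\delta\rVert^2$ from the consistency term. Your remark that the cancellation needs a common target $u$ is the right caveat, and it is exactly what Proposition~\ref{prop:shrink} relaxes.
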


\emph{Proof.} Writing $D_0 = \bar{D} + \delta$ and $D_p = \bar{D} - \delta$, the two supervised terms sum to $2\lVert\bar{D}-u\rVert^2 + 2\lVert\delta\rVert^2$ because the cross terms $\pm 2\langle\bar{D}-u,\delta\rangle$ cancel; halving gives $\lVert\bar{D}-u\rVert^2 + \lVert\delta\rVert^2$, and the consistency term is $\lambda\lVert 2\delta\rVert^2 = 4\lambda\lVert\delta\rVert^2$. $\square$

The supervised anchor constrains only the cross-view mean, and the consistency weight scales only the view-residual penalty. On invariant coordinates, where both branches share the target $u$, the term is therefore unbiased pressure toward view agreement.

\begin{proposition}[Wrong-coordinate shrinkage]
\label{prop:shrink}
Apply the objective of Lemma~\ref{lem:identity} to a block whose targets differ across views, $u_0 \neq u_p$. The pointwise minimizer satisfies $\bar{D}^\ast = \bar{u}$ and
\begin{equation*}
(D_0 - D_p)^\ast = \frac{u_0 - u_p}{1 + 4\lambda},
\qquad
D_v^\ast - u_v = \mp\,\frac{2\lambda}{1+4\lambda}\,(u_0 - u_p).
\end{equation*}
\end{proposition}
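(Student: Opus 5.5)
The plan is to rerun the change of variables from Lemma~\ref{lem:identity} with two different targets, and then minimize the resulting expression separately in the mean and in the residual. Write $D_0 = \bar{D} + \delta$ and $D_p = \bar{D} - \delta$, as in the lemma. Likewise split the targets as $u_0 = \bar{u} + \epsilon$ and $u_p = \bar{u} - \epsilon$, where $\bar{u} = \tfrac12(u_0+u_p)$ and $\epsilon = \tfrac12(u_0-u_p)$. Then $D_0 - u_0 = (\bar{D}-\bar{u}) + (\delta - \epsilon)$ and $D_p - u_p = (\bar{D}-\bar{u}) - (\delta-\epsilon)$.

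The same cancellation of cross terms as in the lemma gives the halved supervised sum $\lVert \bar{D}-\bar{u}\rVert^2 + \lVert \delta - \epsilon\rVert^2$. The consistency term is still $4\lambda\lVert\delta\rVert^2$. The objective is therefore
\[
\lVert \bar{D}-\bar{u}\rVert^2 + \lVert \delta-\epsilon\rVert^2 + 4\lambda\lVert\delta\rVert^2 ,
\]
which is separable in $\bar{D}$ and $\delta$. This is the only step needing care, since it generalizes the lemma rather than invoking it verbatim. The check is that the mixed term $\langle \bar{D}-\bar{u}, \delta-\epsilon\rangle$ cancels exactly as before.

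Each piece is then a strictly convex quadratic (for $\lambda \ge 0$), so its unique minimizer follows from the first-order condition. The mean term gives $\bar{D}^\ast = \bar{u}$. The residual term has gradient $2(\delta-\epsilon) + 8\lambda\delta$, which vanishes at $\delta^\ast = \epsilon/(1+4\lambda)$. Hence
\[
(D_0-D_p)^\ast = 2\delta^\ast = \frac{u_0-u_p}{1+4\lambda}.
\]

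Finally I compute the per-view errors. We have $D_0^\ast - u_0 = \delta^\ast - \epsilon = -\tfrac{4\lambda}{1+4\lambda}\epsilon = -\tfrac{2\lambda}{1+4\lambda}(u_0-u_p)$. The $p$ branch gives the same expression with the opposite sign, which yields the stated $\mp$ (minus for $v=0$, plus for $v=p$). Pointwise optimality is in the sense of minimizing the integrand for each fixed input and noise draw. No step is a genuine obstacle; the only place to be careful is the sign bookkeeping in this last identity.
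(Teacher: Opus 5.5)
Your proof is correct and is essentially the paper's argument. The paper writes the first-order conditions $(D_0-u_0)+2\lambda(D_0-D_p)=0$ and $(D_p-u_p)-2\lambda(D_0-D_p)=0$ directly in $(D_0,D_p)$ and then sums and subtracts them, which is the same mean/residual change of basis you use, applied to the stationarity equations instead of to the objective. Separating the objective first, as you do, makes uniqueness explicit: the residual part is strictly convex with Hessian $2(1+4\lambda)I$. It also shows Lemma~\ref{lem:identity} as the special case $\epsilon=0$.
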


\emph{Proof.} The objective $\tfrac{1}{2}\lVert D_0-u_0\rVert^2 + \tfrac{1}{2}\lVert D_p-u_p\rVert^2 + \lambda\lVert D_0-D_p\rVert^2$ is convex; its first-order conditions $(D_0-u_0)+2\lambda(D_0-D_p)=0$ and $(D_p-u_p)-2\lambda(D_0-D_p)=0$ sum to $\bar{D}^\ast=\bar{u}$ and subtract to $(D_0-D_p)^\ast=(u_0-u_p)/(1+4\lambda)$; substituting into $D_0^\ast=\bar{u}+\tfrac{1}{2}(D_0-D_p)^\ast$ gives the per-branch bias. $\square$

Consistency on the covariant block is thus adversarial to the supervised objective: the predicted futures of the two views are pulled toward a ghosted average, with a per-view bias approaching half the true view difference as $\lambda \to \infty$. This is the structural reason a WAM cannot reuse the VLA recipe unchanged, and it yields a falsifiable quantitative prediction that we test below.

\begin{proposition}[Noise pairing]
\label{prop:noise}
Fix $\sigma$ and let $D_v(n)$ denote the branch prediction under noise realization $n$. With independent draws $n, n'$ across branches, the consistency term satisfies
$\mathbb{E}_{n,n'} \lVert D_0(n) - D_p(n')\rVert^2 = \mathbb{E}_{n} \lVert D_0(n) - D_p(n)\rVert^2 + 2\,\mathrm{tr}\,\mathrm{Cov}_n\bigl(D_0(n), D_p(n)\bigr)$.
At exact view invariance $D_0 \equiv D_p \equiv D$ the added term equals $2\,\mathrm{tr}\,\mathrm{Var}_n(D) > 0$: the independent-noise objective cannot reach zero even for a perfectly invariant denoiser, and its residual gradient penalizes the legitimate dependence of a correct denoiser on its noisy input.
\end{proposition}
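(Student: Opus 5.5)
The plan is a bias--variance expansion around the branch means. We fix $\sigma$ and the conditioning, and write $\mu_v = \mathbb{E}_n D_v(n)$ and $\tilde D_v(n) = D_v(n) - \mu_v$ for $v\in\{0,p\}$. Both expectations in the statement are then expanded into a mean part and a centered part, and the two expansions are compared term by term.

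\emph{Independent draws.} Since $n$ and $n'$ are independent, the cross term $\mathbb{E}\langle \tilde D_0(n), \tilde D_p(n')\rangle$ factorizes into $\langle \mathbb{E}\tilde D_0, \mathbb{E}\tilde D_p\rangle = 0$. The terms mixing means with centered parts vanish for the same reason. This leaves
\[
\mathbb{E}_{n,n'}\lVert D_0(n)-D_p(n')\rVert^2 = \lVert\mu_0-\mu_p\rVert^2 + \mathrm{tr}\,\mathrm{Var}(D_0) + \mathrm{tr}\,\mathrm{Var}(D_p).
\]

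\emph{Shared draw.} The mean cross terms still vanish. The centered cross term no longer factorizes, however, and equals $\mathbb{E}\langle\tilde D_0(n),\tilde D_p(n)\rangle = \mathrm{tr}\,\mathrm{Cov}_n(D_0,D_p)$. Hence
\[
\mathbb{E}_n\lVert D_0(n)-D_p(n)\rVert^2 = \lVert\mu_0-\mu_p\rVert^2 + \mathrm{tr}\,\mathrm{Var}(D_0) + \mathrm{tr}\,\mathrm{Var}(D_p) - 2\,\mathrm{tr}\,\mathrm{Cov}_n(D_0,D_p).
\]
Subtracting the two displays gives the stated identity. I will state the finite-second-moment assumption that justifies these manipulations.

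\emph{Invariance case.} If $D_0\equiv D_p\equiv D$, then $\mathrm{Cov}_n(D,D)=\mathrm{Var}_n(D)$. The shared-draw objective is then identically zero, while the independent-draw objective equals $2\,\mathrm{tr}\,\mathrm{Var}_n(D)$. This is strictly positive whenever $D(\cdot)$ is not almost surely constant in $n$. That condition is exactly the legitimate behaviour of a correct denoiser at finite $\sigma$, since its $x_0$ estimate depends on the noisy input $x_\sigma = x_0+\sigma n$. I should state this nondegeneracy condition explicitly, because it is the real content behind ``$>0$''.

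\emph{Gradient claim (main obstacle).} This is the only step that is not pure algebra, and I will make it precise by restricting to the invariant configuration. The residual $2\,\mathrm{tr}\,\mathrm{Var}_n(D_\theta)$ has gradient $4\,\mathbb{E}_n\bigl[(\partial_\theta D_\theta)^\top(D_\theta-\mu_\theta)\bigr]$. This gradient is generically nonzero, and it points in the direction that makes $D_\theta$ less sensitive to $n$, i.e.\ toward the constant $\mu_\theta$. By contrast, the shared-draw objective has zero value and zero gradient at exact invariance. Following the independent-draw gradient therefore moves the denoiser away from the correct, input-dependent $x_0$ estimator. I expect to present this as a qualitative consequence of the identity rather than a sharp theorem.
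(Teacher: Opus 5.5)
Your proposal is correct and follows the paper's proof in substance: both expand the two squared norms and observe that only the cross term differs between shared and independent draws, which leaves $2\,\mathrm{tr}\,\mathrm{Cov}_n(D_0,D_p)$; you center around the branch means first, while the paper cancels the raw moments $\mathbb{E}\lVert D_v\rVert^2$ directly, and this is the same computation. Your gradient paragraph goes beyond the paper, whose proof leaves that clause unargued, and it needs one refinement: at an invariant point the two branches' Jacobians need not coincide, so the exact gradient of the independent-draw objective there is $2\,\mathbb{E}_n\bigl[(\partial_\theta D_0+\partial_\theta D_p)^\top(D-\mu)\bigr]$ with $\mu=\mathbb{E}_n D$, which reduces to your $4\,\mathbb{E}_n\bigl[(\partial_\theta D)^\top(D-\mu)\bigr]$ only when $\partial_\theta D_0=\partial_\theta D_p$, though a descent step still lowers the summed branch variance to first order, so your qualitative conclusion stands.
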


\emph{Proof.} Expanding both squared norms, the $\mathbb{E}\lVert D_v\rVert^2$ terms coincide across the shared and the independent draws because $n,n'$ share a distribution, leaving $2\bigl(\mathbb{E}\langle D_0(n),D_p(n)\rangle - \langle\mathbb{E}D_0,\mathbb{E}D_p\rangle\bigr) = 2\,\mathrm{tr}\,\mathrm{Cov}_n(D_0,D_p)$; at $D_0\equiv D_p\equiv D$ this equals $2\,\mathrm{tr}\,\mathrm{Var}_n(D)\ge 0$, strictly positive whenever the denoiser depends on its input. The covariance term is not globally signed; we use the positive-sign conclusion only in the near-invariant regime. $\square$

Sharing $(\sigma, n)$ across branches is therefore part of the augmentation matching itself. Collectively, the comparator is sound exactly when its two arms match in state, coordinates, and noise, and differ only in the camera.

\subsection{Controlled verification of the shrinkage law}
\label{sec:verification}

Proposition~\ref{prop:shrink} describes pointwise minimizers; whether stochastic training of a shared network reaches them is an empirical question, and it is the question that matters, since the law is our evidence that selectivity is necessary. We test it at two scales. An exact simulation with independent per-pair minimizers reproduces the predicted ratio to machine precision, validating the derivation and the measurement code. A controlled experiment then trains a shared-trunk denoiser to convergence on synthetic two-view targets with an exactly invariant and an exactly covariant block, applying the consistency term to the covariant block, and measures the normalized view-difference ratio $R(\lambda)/R(0)$ on held-out pairs. The measured ratios are $0.715$, $0.337$, and $0.116$ at $\lambda \in \{0.1, 0.5, 2.0\}$, against predicted values $0.714$, $0.333$, and $0.111$: within ten percent everywhere and monotone in $\lambda$. Shared-network training reaches the per-pair optimum here, so wrong-coordinate consistency collapses covariant content exactly as the algebra says. A full quantitative measurement of the law on the 2B model is out of scope; as a qualitative on-model check we trained a short ($2$k-step) wrong-coordinate arm on the WAM itself, identical to the selective run except that the consistency term is \emph{additionally} imposed on the covariant future-scene block. Its training covariant guard---the held-out cross-view future-scene ratio---collapses to $0.18$ at $2$k steps, descending toward the Proposition~\ref{prop:shrink} asymptote, while the selective run and the $\lambda_{\mathrm{CV}}{=}0$ control both hold near $0.98$. Decoding the predicted future scene from held-out pairs makes the collapse visible (Fig.~\ref{fig:ewc}): on off-nominal viewpoints the wrong-coordinate model blurs its prediction toward a view-averaged image, ghosting the scene, while the selective run keeps distinct, view-specific futures.

\begin{figure}[t]
\centering
\includegraphics[width=\linewidth]{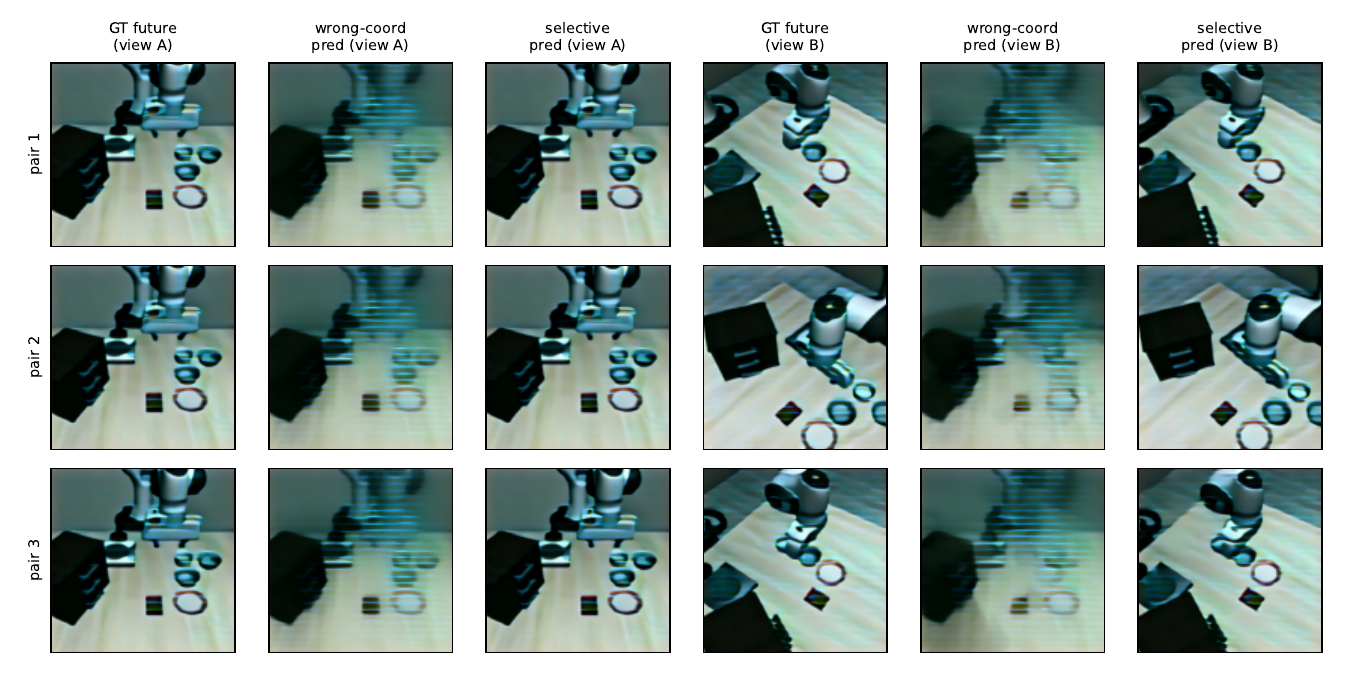}
\caption{Wrong-coordinate consistency ghosts the predicted future scene (2B model, both arms at $2$k steps, shared noise draw). For three held-out pairs we decode the future-scene prediction under two cameras. On the near-nominal view A both arms track the ground-truth future; on the off-nominal view B the wrong-coordinate arm collapses its prediction toward a view-averaged, ghosted image, whereas the selective run preserves the view-specific future.}
\label{fig:ewc}
\end{figure}

\section{A Carve-and-Hold-Out Evaluation Protocol}
\label{sec:protocol}

\subsection{Why the protocol is needed}

A consistency method must train on perturbed viewpoints, which rules out the zero-shot evaluation available to a nominally trained policy. The two obvious alternatives both mislead. Sampling training cameras disjoint from the benchmark but concentrated near nominal produces models that fail on severe viewpoints for lack of severity coverage; an early version of our training set had no pairs beyond $31^\circ$ of azimuth and every method variant floored on severe cells for that reason alone. Matching the training camera distribution to the benchmark, the disclosed contract used in prior cross-view work~\cite{anon2026crossview}, produces strong scores that certify view-invariant control in distribution but leave no room for an out-of-distribution claim, because every evaluated pose was trained on. The protocol below keeps the disclosed distribution-matched contract where it is honest and adds held-out structure where an OOD claim needs it. Throughout, the leakage question is kept separate from the pose question: training images are re-rendered from recorded demonstration states while evaluation runs fresh rollouts from benchmark initial states, so states, rollouts, and labels never cross between training and evaluation regardless of camera overlap.

\subsection{Carving and the sparse-axis flip}

LIBERO-Plus enumerates $1{,}599$ camera-perturbed task instances whose parameters span four axes: orbital azimuth (horizon, $\pm 75^\circ$), elevation (vertical, $\{0, 15\}$), dolly distance (scale, $100$ to $200$), and in-place reorientation of the camera endpoint ($\pm 10$)~\cite{fei2025liberoplus}. Our training pairs sample the same axes with density matched to the benchmark marginal, after which we remove held-out regions. On the dense axes we carve bands: azimuth training keeps $|h| \in [0, 62]$ minus an interpolation hole at $(20, 30)$, leaving $(62, 75]$ as extrapolation; scale keeps $[100, 175]$ minus a hole at $(128, 142)$, leaving $(175, 200]$. On the sparse axes carving is impossible, since the benchmark uses a handful of discrete values, so we flip: all official endpoint values are removed from training and replaced by self-rendered pairs at the non-official values $\{\pm 3, \pm 5, \pm 7\}$, making official $\{\pm 4, \pm 6\}$ interpolation and $\{\pm 8, \pm 10\}$ extrapolation; elevation training uses the non-official value $8$, making the official $15$ a single extrapolation point. Evaluation thus consists entirely of official benchmark tasks on every axis.

The final integrated training manifest contains $583{,}648$ same-state pairs. An automated guard asserts that no held-out value appears in any training row, and the severe-azimuth range $[31, 62]$ retains $66{,}437$ pairs, closing the coverage failure mode described above. At the fixed training budget the manifest is traversed about $6.2$ times, below our pre-set overfitting threshold of ten effective epochs.

\subsection{Buckets, controls, and pre-registered analysis}

Parsing the camera parameters of every official task assigns it to one of twelve axis-level buckets. Buckets inside trained bands form the distribution-matched ceiling; buckets in holes are interpolation and buckets beyond the trained range are extrapolation, both genuinely unseen. The nine single-axis buckets appear with their sizes in Table~\ref{tab:main}; two small compound buckets and the $\pm 2$ endpoint bucket are reported in the text only, for completeness.

Two models are trained on the identical carved manifest from the same scene-only base checkpoint: a control with $\lambda_{\mathrm{CV}} = 0$ and the method (\method{}) with $\lambda_{\mathrm{CV}} = 2.0$. Because the control receives the supervised loss on both branches, it is a camera-augmented baseline, and the method-minus-control difference isolates the consistency term under identical data, initialization, and budget. Our analysis plan was fixed before evaluation: the primary endpoints are the azimuth interpolation and extrapolation buckets, compared by paired task bootstrap with $95\%$ confidence intervals; the remaining single-axis buckets are secondary; in-distribution buckets carry no OOD language. We report the plan's outcome as measured, including null results.

\section{Experiments}
\label{sec:experiments}

\subsection{Setup}
\label{sec:exp-setup}

All models continue post-training from the released Cosmos Policy LIBERO checkpoint~\cite{kim2026cosmos} under a scene-only configuration. A stable scene-only reference is first trained on nominal LIBERO data~\cite{liu2023libero} and frozen; both carved-manifest models then initialize from it. Every model trains for the same budget of $7.2$ million sample presentations ($10{,}000$ steps at effective batch $720$, learning rate $5{\times}10^{-5}$, EMA), and each reported policy is the single final checkpoint of its run. Evaluation uses the deterministic protocol of the released codebase with ten trials per task for bucketed comparisons. In-distribution skill is measured on the four LIBERO suites with $500$ episodes per suite.

\subsection{What camera-track numbers measure: the wrist audit}
\label{sec:audit}

Figure~\ref{fig:audit} presents a masking audit of the released wrist-enabled Cosmos policy run directly on the camera-perturbed tasks ($1{,}200$ rollouts per condition). Blacking out the scene camera, the very sensor the camera benchmark perturbs, lowers success only from $80.5\%$ (unmasked, Wilson $95\%$ CI $[78.2, 82.6]$) to $66.6\%$ ($[63.9, 69.2]$), with a scene-noise fill at $69.7\%$ ($[67.0, 72.2]$); blacking out the wrist collapses it to $0.2\%$ ($[0.1, 0.7]$). The policy thus still solves two-thirds of camera-perturbed tasks with no scene view at all, while it is helpless without the wrist. We rest the argument on the scene-mask direction: succeeding without an input proves the input is unnecessary, whereas failure without an input can also reflect sensitivity to the corruption itself. A smaller pilot on nominal tasks shows the same asymmetry and is stable across fill types ($140/180$ scene-masked versus $37/180$ wrist-masked, with the scene result varying by at most $5$ points across black, gray, and noise fills), and a public $\pi_{0.5}$ LIBERO policy is likewise unaffected by scene masking and completely broken by wrist masking~\cite{intelligence2025pi05,anon2026crossview}.

The consequence on the benchmark is a systematic confound (Table~\ref{tab:audit}). Between the wrist-enabled released checkpoint and our matched scene-only reference, in-distribution success differs by about five points while camera-track success differs by more than fifty: the wrist-equipped policy loses $19.0$ points under camera perturbation where the scene-only policy loses $67.2$. The official leaderboard corroborates this with an independent model family, where removing wrist input from OpenVLA-OFT drops its camera score from $56.4$ to $10.4$~\cite{fei2025liberoplus}. The scene view still informs wrist-equipped policies---masking it costs $14$ points aggregate, ranging from a gain on \texttt{spatial} to a large drop on \texttt{goal}---and the audit shows the wrist view is a confound for measuring scene-view invariance, the quantity a camera benchmark intends to measure. All mechanism experiments in this paper are therefore scene-only.

\begin{figure}[t]
\centering
\includegraphics[width=0.92\linewidth]{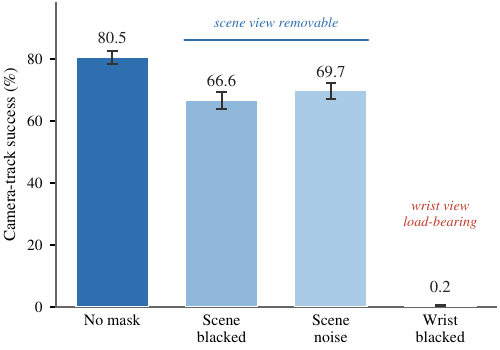}
\caption{Masking audit of the released wrist-enabled Cosmos policy on the camera-perturbed track ($1{,}200$ rollouts per condition; error bars are Wilson $95\%$ CIs). Removing the scene camera---the sensor the benchmark perturbs---leaves two-thirds of tasks solvable; removing the wrist collapses success to near zero. The camera score is therefore carried by the pose-stable wrist view.}
\label{fig:audit}
\end{figure}

\begin{table}[t]
\caption{Consequence of the wrist confound on LIBERO-Plus. ID and Camera are success rates in \%; Drop is their difference. Between the wrist-enabled release and our matched scene-only reference, ID differs by five points but the camera drop differs by nearly fifty. OFT numbers (wrist\,/\,no-wrist) are from the official leaderboard~\cite{fei2025liberoplus}.}
\label{tab:audit}
\centering
\small
\setlength{\tabcolsep}{5pt}
\begin{tabular}{lccc}
\toprule
Policy & ID & Camera & Drop \\
\midrule
Released ckpt (wrist) & $98.5$ & $79.5$ & $19.0$ \\
Scene-only reference & $93.2$ & $26.0$ & $67.2$ \\
OFT (wrist\,/\,none) & -- & $56.4$\,/\,$10.4$ & -- \\
\bottomrule
\end{tabular}
\end{table}

\subsection{Motivating observation: the dream survives where the action fails}
\label{sec:dissociation}

The scene-only reference reaches $93.2\%$ in distribution yet $26.0\%$ on the full camera track, and the drop is uneven across axes: $7.5\%$ under dolly, $21.4\%$ under orbital viewpoint change, $61.3\%$ under in-place reorientation. This is the zero-shot gap the method addresses, and the orbital axis, which contributes the majority of the benchmark's camera instances, is where most of it lives. Before asking whether consistency helps, it is worth locating the failure. On the frozen scene-only reference we measured both sides of the model across the full camera track: closed-loop action success, and the fidelity of the predicted future scene via a camera-conditioned excess-FID that subtracts an oracle floor computed from ground-truth replays under the same camera. On the orbital axis, action success collapses from $53\%$ to $3\%$ across severity levels while relative excess-FID stays flat (between $-0.07$ and $+0.16$ across all levels). On the dolly axis, in contrast, prediction fidelity co-degrades with control. The video prior thus survives precisely on the axis where control fails hardest, echoing foresight-action misalignment observed elsewhere~\cite{qiu2026foresight}, and marking the orbital axis as the natural primary target for an action-side repair. We use this as motivation; the excess-FID certifies distributional fidelity of the predicted frames, a strictly weaker property than semantic plan correctness.

\begin{figure}[t]
\centering
\includegraphics[width=0.94\linewidth]{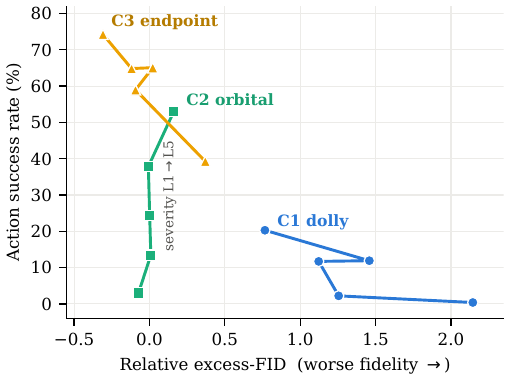}
\caption{Fidelity-control dissociation on the scene-only reference. On the orbital axis (C2) the predicted future scene retains its fidelity while closed-loop success collapses; on the dolly axis (C1) both degrade together; in-place reorientation (C3) stays near baseline fidelity. Each marker is one severity level (L1--L5); the dashed line is zero relative excess-FID (fidelity matched to the oracle floor).}
\label{fig:e1}
\end{figure}

\subsection{Main result: held-out viewpoint buckets}
\label{sec:main-results}

Table~\ref{tab:main} reports the pre-registered comparison.

The primary comparison is mixed, and the split falls exactly on the interpolation--extrapolation boundary. On the two pre-registered orbital (C2 azimuth) endpoints, \method{} leaves interpolation unchanged---$85.1$ vs.\ $86.3$, a paired difference of $-1.22$ points whose $95\%$ CI $[-4.29, +1.84]$ straddles zero---while improving extrapolation to viewpoints beyond the training envelope by $+12.20$ points ($[+7.40, +17.00]$, excluding zero). We scope the claim to this extrapolation regime and report the interpolation result as a null. Two further camera axes, pre-registered as secondary endpoints, replicate the pattern independently: dolly-scale extrapolation improves by $+4.24$ ($[+0.71, +7.88]$) and elevation extrapolation by $+8.76$ ($[+6.40, +11.11]$), while both interpolation counterparts are null; the C3 in-place-reorientation axis is null on both sides. Three independent perturbation axes thus share one signature: no effect where held-out views interpolate within the training envelope, and a positive, CI-separated effect where they extrapolate beyond it. The gains do not trade against in-distribution competence: full four-suite ID differs by only $-0.60$ points, within our pre-registered preservation tolerance, and the C1 in-distribution ceiling is flat ($-0.18$); the distribution-matched C2 azimuth region shows a descriptive $+4.52$-point gain, which we classify as distribution-matched improvement since those views lie inside the training envelope. The three buckets omitted from Table~\ref{tab:main} are the two compound azimuth-and-elevation buckets, null here ($+0.94$, CI $[-2.34, +4.38]$; $+4.85$, CI $[-2.63, +11.82]$), and the six-task $\pm 2$ endpoint bucket, too small to read ($-5.00$, CI $[-13.33, +3.33]$). We interpret the signature, and the places where it disappears---including the suite-level regression hiding inside the compound null---in Section~\ref{sec:discussion}.

A second seed, both arms retrained under the identical protocol, reproduces the hierarchy: extrapolation $+15.50$ (CI $[+11.70, +19.40]$), dolly $+4.44$ ($[+0.61, +8.49]$), elevation $+4.84$ ($[+3.11, +6.58]$), the distribution-matched azimuth region $+3.40$ ($[+1.77, +4.93]$), ID preservation $-0.2$, and both compound buckets now positive ($+5.16$, $[+0.78, +9.69]$; $+16.36$, $[+12.73, +20.20]$). Its interpolation cell reads $-4.29$ ($[-7.56, -1.22]$). Across seeds the method's interpolation successes move by $3$ trials in $490$ ($85.1\%$ to $85.7\%$) and the control's by $18$ ($86.3\%$ to $90.0\%$); $14$ of those $18$ lie in the five interpolation views of one long-horizon task whose success varies strongly between checkpoints, and the \texttt{goal} suite's control solves $100\%$ of interpolation trials in both seeds (the method: $95.6\%$, $90.0\%$).

\begin{table}[t]
\caption{Held-out viewpoint evaluation. Both models are trained on the identical carved pair manifest from the same base checkpoint; the only difference is the consistency weight. Success rates in \%, ten trials per task; $\Delta$ is the paired task-bootstrap difference with $95\%$ CI. Primary endpoints in bold. In-distribution buckets are a distribution-matched ceiling, not an OOD claim. Numbers are the first training seed; the second seed's twelve-bucket table is reported in the text and released with the code.}
\label{tab:main}
\centering
\setlength{\tabcolsep}{4pt}
\resizebox{\columnwidth}{!}{%
\begin{tabular}{llrccc}
\toprule
Axis & Bucket & $N$ & Control & \method{} & $\Delta$ (95\% CI) \\
\midrule
\multirow{3}{*}{C2 azimuth} & in-dist & 294 & $84.0$ & $88.5$ & $+4.52$~($+2.65,+6.39$) \\
 & \textbf{interp} & 49 & $86.3$ & $85.1$ & $\mathbf{-1.22}$~($-4.29,+1.84$) \\
 & \textbf{extrap} & 100 & $63.8$ & $76.0$ & $\mathbf{+12.20}$~($+7.40,+17.00$) \\
\midrule
\multirow{3}{*}{C1 dolly} & in-dist & 168 & $91.5$ & $91.3$ & $-0.18$~($-1.90,+1.67$) \\
 & interp & 46 & $88.7$ & $89.6$ & $+0.87$~($-1.96,+3.91$) \\
 & extrap & 99 & $67.0$ & $71.2$ & $+4.24$~($+0.71,+7.88$) \\
\midrule
C2 elevation & extrap & 386 & $54.9$ & $63.7$ & $+8.76$~($+6.40,+11.11$) \\
\midrule
\multirow{2}{*}{C3 endpoint} & interp & 62 & $92.6$ & $91.6$ & $-0.97$~($-3.71,+1.61$) \\
 & extrap & 226 & $88.6$ & $88.0$ & $-0.58$~($-2.12,+1.02$) \\
\midrule
\multicolumn{2}{l}{ID (4 suites, 500 eps/suite)} & & $92.3$ & $91.7$ & $-0.60$ \\
\bottomrule
\end{tabular}}
\end{table}

\section{Discussion}
\label{sec:discussion}

\textbf{What the evidence attributes to the objective.} The comparison isolates the consistency term from pair exposure: the control and \method{} share the base checkpoint, the carved manifest, the initialization, the budget, and the noise schedule, and differ only in $\lambda_{\mathrm{CV}}$. The control saw exactly the same perturbed cameras, so the extrapolation gain is attributable to the consistency term alone. The three-axis signature---null within the training envelope, positive and CI-separated beyond it---is the pattern the mechanism predicts: a per-state cross-view equivalence has more reason to transport to unsampled cameras than marginal coverage does (Section~\ref{sec:setup}), so its benefit should surface precisely where coverage runs out. The interpolation cells fit this reading: inside the narrow holes both models sit at $85$--$93\%$ and leave little to recover, whereas the extrapolation buckets begin at $55$--$67\%$; what varies there across seeds is the control ($86.3$ to $90.0$) while the method holds ($85.1$, $85.7$). The same logic covers the one in-distribution effect: the distribution-matched azimuth region is the only trained region below ceiling ($84\%$, because it includes severe trained views), and there the constraint also helps ($+4.52$)---the per-state equivalence pays wherever severity leaves headroom, and the held-out buckets isolate the part of that benefit which transports beyond coverage.

Two further checks constrain the interpretation from opposite sides. Destroying the state matching with a shuffled-pair control, in the flow-VLA instantiation of the same objective, removes the benefit~\cite{anon2026crossview}, ruling out generic regularization; and the controlled shrinkage-law verification (Section~\ref{sec:verification}) shows the wrong-coordinate variant failing exactly as the algebra predicts, so selectivity is doing identifiable work.

\textbf{Where the effect stops, and why.} The gain is real but bounded and uneven, and the boundaries are informative. The near-null on the C3 endpoint axis is expected: its held-out perturbation spans only $\pm 10^\circ$, small enough that the control already generalizes, leaving little invariance for the objective to add. The heterogeneity inside the primary extrapolation bucket is sharper---\texttt{spatial} improves by $40.8$ points while \texttt{object} regresses by $11.6$---and under a compound extreme azimuth-and-elevation shift outside the pre-registered endpoints the \texttt{object} suite regresses by $40.4$ points, a failure no single-axis bucket approaches. The pattern admits a consistent reading: cross-view agreement on the invariant block constrains where the end-effector should go, while the fine grasp geometry that the object-centric suites stress lies outside its reach, so on tasks dominated by grasp-precision failures the objective has little to repair---and the compound-shift regression shows the trade can go negative. On this reading, output-level selectivity addresses the viewpoint-disorientation component of the zero-shot gap, which dominates the orbital axis identified in Section~\ref{sec:dissociation}, and leaves the manipulation-precision component to complementary methods. The exchange the objective offers is asymmetric: inside the envelope it buys nothing and can cost a few points against a near-ceiling control; beyond the envelope, where the control falls to $55$--$67\%$, it returns $+4$ to $+16$ points across three axes and two seeds.

\section{Limitations}
\label{sec:limitations}

The method requires same-state cross-view pairs, which are cheap in simulation through state-reset rendering but demand synchronized multi-camera rigs or paired datasets in the real world; extending the analysis to approximately matched pairs is future work, and the covariance decomposition behind Proposition~\ref{prop:noise} suggests a route. All closed-loop evidence in this paper is likewise simulated, on a recognized public benchmark under a disclosed protocol; the flow-VLA instantiation of the same consistency mechanism has transferred to a real robot~\cite{anon2026crossview}, which speaks to the mechanism's transferability; real-robot evidence for the WAM instantiation remains open. Consistency is supervised only on demonstration states, since rollout data lacks recoverable simulator states, so we make no claim about view robustness on failure or recovery states. The elevation axis contributes a single extrapolation point by construction of the benchmark. Both arms were retrained under an independent second seed; the extrapolation gains, the ceiling effect, and ID preservation reproduce, and the interpolation cell reads $-1.22$ (null) in one seed and $-4.29$ (CI $[-7.56, -1.22]$) in the other. Interpolation buckets are small (46--49 tasks) and sit at $85$--$93\%$ success, where checkpoint-to-checkpoint variation exceeds the paired sampling interval. The scene-only protocol is a controlled lesion for isolating scene-view invariance, and wrist-camera pose perturbation is outside our scope. Finally, the shrinkage law is verified quantitatively at controlled scale and demonstrated qualitatively on the 2B model itself (Fig.~\ref{fig:ewc}); its quantitative measurement at scale, and the equally interesting question of how much view sensitivity survives inside the network when only outputs are constrained, are deferred.

\section{Conclusion}
\label{sec:conclusion}

We asked where cross-view consistency belongs in a world action model and answered with a selectivity principle: constrain the coordinates the camera cannot move, supervise the rest per view, and match everything but the nuisance, including the noise draw. The wrong-coordinate alternative actively harms the model, with a quantitative failure law confirmed under controlled training. We paired the method with an evaluation protocol that separates distribution-matched competence from held-out generalization on a public benchmark, with a matched control that prices pair exposure separately from the objective. This selectivity buys measurable robustness to held-out viewpoints in the extrapolation regime---beyond the envelope the training pairs cover---replicated across three camera axes, while leaving interpolation and in-distribution competence intact. The protocol, the audit, and the selectivity analysis apply to any WAM whose denoising target mixes invariant and covariant blocks, beyond the backbone studied here.

\bibliographystyle{IEEEtran}
\bibliography{refs}

\begin{thebibliography}{10}
\providecommand{\url}[1]{#1}
\csname url@samestyle\endcsname
\providecommand{\newblock}{\relax}
\providecommand{\bibinfo}[2]{#2}
\providecommand{\BIBentrySTDinterwordspacing}{\spaceskip=0pt\relax}
\providecommand{\BIBentryALTinterwordstretchfactor}{4}
\providecommand{\BIBentryALTinterwordspacing}{\spaceskip=\fontdimen2\font plus
\BIBentryALTinterwordstretchfactor\fontdimen3\font minus
  \fontdimen4\font\relax}
\providecommand{\BIBforeignlanguage}[2]{{%
\expandafter\ifx\csname l@#1\endcsname\relax
\typeout{** WARNING: IEEEtran.bst: No hyphenation pattern has been}%
\typeout{** loaded for the language `#1'. Using the pattern for}%
\typeout{** the default language instead.}%
\else
\language=\csname l@#1\endcsname
\fi
#2}}
\providecommand{\BIBdecl}{\relax}
\BIBdecl

\bibitem{kim2026cosmos}
M.~J. Kim, Y.~Gao, T.-Y. Lin, Y.-C. Lin, Y.~Ge, G.~Lam, P.~Liang, S.~Song,
  M.-Y. Liu, C.~Finn, and J.~Gu, ``Cosmos policy: Fine-tuning video models for
  visuomotor control and planning,'' \emph{arXiv preprint arXiv:2601.16163},
  2026.

\bibitem{ye2026dreamzero}
S.~Ye, Y.~Ge, K.~Zheng, S.~Gao, S.~Yu, G.~Kurian, S.~Indupuru, Y.~L. Tan,
  C.~Zhu, J.~Xiang, A.~Malik, K.~Lee \emph{et~al.}, ``World action models are
  zero-shot policies,'' \emph{arXiv preprint arXiv:2602.15922}, 2026.

\bibitem{wang2026wamsurvey}
S.~Wang, J.~Shi, Z.~Fu, X.~He, F.~Liu, C.~Yang, Y.~Zhou, Z.~Fei, J.~Gong,
  J.~Fu, M.~Z. Shou, X.~Huang, X.~Qiu, and Y.-G. Jiang, ``World action models:
  The next frontier in embodied {AI},'' \emph{arXiv preprint arXiv:2605.12090},
  2026.

\bibitem{fei2025liberoplus}
S.~Fei, S.~Wang, J.~Shi, Z.~Dai, J.~Cai, P.~Qian, L.~Ji, X.~He, S.~Zhang,
  Z.~Fei, J.~Fu, J.~Gong, and X.~Qiu, ``{LIBERO}-plus: A progressive robustness
  benchmark for visual-language-action models,'' in \emph{IEEE/CVF Conference
  on Computer Vision and Pattern Recognition (CVPR)}, 2026.

\bibitem{zhang2026generalize}
Z.~Zhang, Z.~Li, B.~Rahmati, R.~H. Yang, Y.~Ma, A.~Rasouli, S.~Pakdamansavoji,
  Y.~Wu, L.~Zhang, T.~Cao, F.~Wen, X.~Wang, X.~Quan, and Y.~Zhang, ``Do world
  action models generalize better than {VLAs}? {A} robustness study,''
  \emph{arXiv preprint arXiv:2603.22078}, 2026.

\bibitem{intelligence2025pi05}
{Physical Intelligence}, K.~Black, N.~Brown, J.~Darpinian, K.~Dhabalia,
  D.~Driess \emph{et~al.}, ``$\pi_{0.5}$: a vision-language-action model with
  open-world generalization,'' \emph{arXiv preprint arXiv:2504.16054}, 2025.

\bibitem{anon2026crossview}
B.~Huang, B.~Wei, X.~Wang, Y.~Cai, and Z.~Wang, ``Cross-view action
  consistency for camera-robust vision-language-action policies,''
  \emph{arXiv preprint arXiv:2608.XXXXX}, 2026.

\bibitem{liu2026oawam}
Y.~Liu, P.~Sun, S.~Li, Y.~Xie, L.~Zhang, X.~Chao, S.~Dong, F.~Chen, X.-P.
  Zhang, and W.~Ding, ``{OA-WAM}: Object-addressable world action model for
  robust robot manipulation,'' \emph{arXiv preprint arXiv:2605.06481}, 2026.

\bibitem{qiu2026foresight}
L.~Qiu, Y.~Li, Y.~Chen, Y.~Ge, Y.~Ge, and X.~Liu, ``Making foresight
  actionable: Repurposing representation alignment in world action models,''
  \emph{arXiv preprint arXiv:2606.12217}, 2026.

\bibitem{pang2025reviwo}
J.-C. Pang, N.~Tang, K.~Li, Y.~Tang, X.-Q. Cai, Z.-Y. Zhang, G.~Niu,
  M.~Sugiyama, and Y.~Yu, ``Learning view-invariant world models for visual
  robotic manipulation,'' in \emph{International Conference on Learning
  Representations (ICLR)}, 2025.

\bibitem{heo2026anycamvla}
H.~Heo, S.~Woo, S.~M. Kim, J.~Kim, J.~Lee, Y.~Lee, and Y.~M. Kim,
  ``{AnyCamVLA}: Zero-shot camera adaptation for viewpoint robust
  vision-language-action models,'' \emph{arXiv preprint arXiv:2603.05868},
  2026.

\bibitem{gu2026vistabot}
S.~Gu, Y.~Zheng, W.~Li, Y.~Zheng, Y.~Feng, X.~Li, Y.~Chen, P.~Li, and W.~Ding,
  ``{VistaBot}: View-robust robot manipulation via spatiotemporal-aware view
  synthesis,'' \emph{arXiv preprint arXiv:2604.21914}, 2026.

\bibitem{jiang2025camera}
T.~Jiang, J.~Ji, X.~Tan, J.~Fang, A.~Bhattad, V.~Guizilini, and M.~R. Walter,
  ``Do you know where your camera is? {View}-invariant policy learning with
  camera conditioning,'' \emph{arXiv preprint arXiv:2510.02268}, 2025.

\bibitem{tian2024vista}
S.~Tian, B.~Wulfe, K.~Sargent, K.~Liu, S.~Zakharov, V.~Guizilini, and J.~Wu,
  ``View-invariant policy learning via zero-shot novel view synthesis,'' in
  \emph{Conference on Robot Learning (CoRL)}, 2024.

\bibitem{chen2024roviaug}
L.~Y. Chen, C.~Xu, K.~Dharmarajan, M.~Z. Irshad, R.~Cheng, K.~Keutzer,
  M.~Tomizuka, Q.~Vuong, and K.~Goldberg, ``{RoVi-Aug}: Robot and viewpoint
  augmentation for cross-embodiment robot learning,'' in \emph{Conference on
  Robot Learning (CoRL)}, 2024.

\bibitem{cai2026beyond}
B.~Cai, Q.~Liang, J.~Li, S.~Weng, Z.~Zhang, T.~Lin, X.~Chen, W.~Zhang, J.~Mao,
  W.~Xu, B.~Yang, J.~Liang, J.~Cai, and R.~Xu, ``Beyond viewpoint
  generalization: What multi-view demonstrations offer and how to synthesize
  them for robot manipulation?'' \emph{arXiv preprint arXiv:2603.26757}, 2026.

\bibitem{zhang2025robustvla}
H.~Zhang, S.~Zhang, J.~Jin, Q.~Zeng, R.~Li, and D.~Wang, ``{RobustVLA}:
  Robustness-aware reinforcement post-training for vision-language-action
  models,'' \emph{arXiv preprint arXiv:2511.01331}, 2025.

\bibitem{yang2024equibot}
J.~Yang, Z.-a. Cao, C.~Deng, R.~Antonova, S.~Song, and J.~Bohg, ``{EquiBot}:
  {SIM}(3)-equivariant diffusion policy for generalizable and data efficient
  learning,'' in \emph{Conference on Robot Learning (CoRL)}, 2024.

\bibitem{wang2024equivariant}
D.~Wang, S.~Hart, D.~Surovik, T.~Kelestemur, H.~Huang, H.~Zhao, M.~Yeatman,
  J.~Wang, R.~Walters, and R.~Platt, ``Equivariant diffusion policy,'' in
  \emph{Conference on Robot Learning (CoRL)}, 2024.

\bibitem{jeong2026vila}
Y.~Jeong, J.~Chun, and T.~Kim, ``Learning to act robustly with view-invariant
  latent actions,'' \emph{arXiv preprint arXiv:2601.02994}, 2026.

\bibitem{laine2017temporal}
S.~Laine and T.~Aila, ``Temporal ensembling for semi-supervised learning,'' in
  \emph{International Conference on Learning Representations (ICLR)}, 2017.

\bibitem{tarvainen2017mean}
A.~Tarvainen and H.~Valpola, ``Mean teachers are better role models:
  Weight-averaged consistency targets improve semi-supervised deep learning
  results,'' in \emph{Advances in Neural Information Processing Systems
  (NeurIPS)}, 2017.

\bibitem{sohn2020fixmatch}
K.~Sohn, D.~Berthelot, N.~Carlini, Z.~Zhang, H.~Zhang, C.~A. Raffel, E.~D.
  Cubuk, A.~Kurakin, and C.-L. Li, ``{FixMatch}: Simplifying semi-supervised
  learning with consistency and confidence,'' in \emph{Advances in Neural
  Information Processing Systems (NeurIPS)}, 2020.

\bibitem{chen2020group}
S.~Chen, E.~Dobriban, and J.~H. Lee, ``A group-theoretic framework for data
  augmentation,'' \emph{Journal of Machine Learning Research}, vol.~21, no.
  245, pp. 1--71, 2020.

\bibitem{yang2023dac}
S.~Yang, Y.~Dong, R.~Ward, I.~S. Dhillon, S.~Sanghavi, and Q.~Lei, ``Sample
  efficiency of data augmentation consistency regularization,'' in
  \emph{International Conference on Artificial Intelligence and Statistics
  (AISTATS)}, 2023.

\bibitem{karras2022edm}
T.~Karras, M.~Aittala, T.~Aila, and S.~Laine, ``Elucidating the design space of
  diffusion-based generative models,'' in \emph{Advances in Neural Information
  Processing Systems (NeurIPS)}, 2022.

\bibitem{liu2023libero}
B.~Liu, Y.~Zhu, C.~Gao, Y.~Feng, Q.~Liu, Y.~Zhu, and P.~Stone, ``{LIBERO}:
  Benchmarking knowledge transfer for lifelong robot learning,'' in
  \emph{Advances in Neural Information Processing Systems (NeurIPS), Datasets
  and Benchmarks Track}, 2023.

\end{thebibliography}

\end{document}